\DeclareFontFamily{U}{mathx}{\hyphenchar\font45}
\DeclareFontShape{U}{mathx}{m}{n}{
      <5> <6> <7> <8> <9> <10> gen * mathx
      <10.95> mathx10 <12> <14.4> <17.28> <20.74> <24.88> mathx12
      }{}
\DeclareSymbolFont{mathx}{U}{mathx}{m}{n}
\DeclareMathSymbol{\intop}  {1}{mathx}{"B3}
\newcommand\indep{\independent}
\newcommand\independent{\protect\mathpalette{\protect\independenT}{\perp}}
\def\independenT#1#2{\mathrel{\rlap{$#1#2$}\mkern4mu{#1#2}}}
\newcommand{\wh}{\widehat}
\let\temp\phi
\let\phi\varphi
\let\varphi\temp
\newcommand{\pr}{\mathbb{P}}
\newcommand{\R}{\mathbb{R}}
\newcommand{\E}{\mathbb{E}}
\newcommand{\given}{\,|\,}  %
\newcommand{\subjectto}{\text{ subject to }}
\newcommand{\DAG}{\mathsf{DAG}}
\newcommand{\score}{L}
\renewcommand{\loss}{\ell}
\newcommand{\gr}{\mathsf{G}}
\newcommand{\ver}{\mathsf{V}}
\newcommand{\edg}{\mathsf{E}}
\DeclareMathOperator{\pa}{pa}
\newcommand{\wadj}{\R^{d\times d}}
\newcommand{\dat}{\mathbf{X}}
\newcommand{\datcol}{\mathbf{x}}
\newcommand{\sob}{H^{1}}
\newcommand{\actv}{\sigma}
\newcommand{\MLP}{\mathsf{MLP}}
\newcommand{\obfcn}{\phi}
\newcommand{\obwgt}{\alpha}
\newcommand{\obmat}{\Phi}
\renewcommand{\err}{z}
\newcommand{\param}{\theta}
\renewcommand{\norm}[1]{\Vert#1\Vert}
\title{\LARGE{Learning Sparse Nonparametric DAGs}}
\author[]{Xun Zheng$^\dag$}
\author[]{Chen Dan$^\dag$}
\author[]{Bryon Aragam$^\ddag$}
\author[]{Pradeep Ravikumar$^\dag$}
\author[]{Eric P. Xing$^\dag$}
\affil[]{$^\dag$\emph{Carnegie Mellon University}, $^\ddag$\emph{University of Chicago}}
\begin{document}

\maketitle

{\let\thefootnote\relax\footnote{Contact: $^\dag$\texttt{xzheng1@andrew.cmu.edu,\{cdan,pradeepr,epxing\}@cs.cmu.edu}, $^\ddag$\texttt{bryon@chicagobooth.edu}}}

\begin{abstract}
We develop a framework for learning sparse nonparametric directed acyclic graphs (DAGs) from data. Our approach is based on a recent algebraic characterization of DAGs that led to a fully continuous program for score-based learning of DAG models parametrized by a linear structural equation model (SEM). We extend this algebraic characterization to nonparametric SEM by leveraging nonparametric sparsity based on partial derivatives, resulting in a continuous optimization problem that can be applied to a variety of nonparametric and semiparametric models including GLMs, additive noise models, and index models as special cases. 
Unlike existing approaches that require specific modeling choices, loss functions, or algorithms, we present a completely general framework that can be applied to general nonlinear models (e.g. without additive noise), general differentiable loss functions, and generic black-box optimization routines.
The code is available at \url{https://github.com/xunzheng/notears}.
\end{abstract}

\section{Introduction}
\label{sec:intro}

Learning DAGs from data is an important and classical problem in machine learning, with a diverse array of applications in causal inference \citep{spirtes2000}, fairness and accountability \citep{kusner2017}, medicine \citep{heckerman1992}, and finance \citep{sanford2012}. In addition to their undirected counterparts, DAG models offer a parsimonious, interpretable representation of a joint distribution that is useful in practice. 
Unfortunately, existing methods for learning DAGs typically rely on specific model assumptions (e.g. linear or additive) and specialized algorithms (e.g. constraint-based or greedy optimization). As a result, the burden is on the user to choose amongst many possible models and algorithms, which requires significant expertise. Thus, there is a need for a general framework for learning different DAG models---subsuming, for example, linear, parametric, and nonparametric---that does not require specialized algorithms. Ideally, the problem could be formulated as a conventional optimization problem that can be tackled with general purpose solvers, much like the current state-of-the-art for undirected graphical models \cite[e.g.][]{suggala2017expxorcist,yang2015,liu2009,hsieh2013,banerjee2008}.

In this paper, we develop such a general algorithmic framework for score-based learning of DAG models. This framework is flexible enough to learn general nonparametric dependence while also easily adapting to parametric and semiparametric models, including nonlinear models.
The framework is based on a recent algebraic characterization of acyclicity due to \citet{zheng2018dags} that recasts the score-based optimization problem as a \emph{continuous} problem, instead of the traditional combinatorial approach. This allows generic optimization routines to be used in minimizing the score, providing a clean conceptual formulation of the problem that can be approached using any of the well-known algorithms from the optimization literature. This work relies heavily on the linear parametrization in terms of a weighted adjacency matrix $W\in\Rbb^{d\times d}$, which is a stringent restriction on the class of models. One of the key technical contributions of the current work is extending this to general nonparametric problems, where no such parametrization in terms of a weighted adjacency matrix exists.

\paragraph{Contributions}
Our main contributions can be summarized as follows:
\begin{itemize}
\item We develop a generic optimization problem that can be applied to nonlinear and nonparametric SEM and discuss various special cases including additive models and index models. 
In contrast to existing work, we show how this optimization problem can be solved to stationarity with generic solvers, eliminating the necessity for specialized algorithms and models.
\item We extend the existing smooth characterization of acyclicity from \citet{zheng2018dags} to general nonparametric models, show that the linear parametrization is a special case of the general framework, and apply this to several popular examples for modeling nonlinear dependencies (Section~\ref{sec:acyclic}).
\item We consider in detail two classes of nonparametric estimators defined through 1) Neural networks and 2) Orthogonal basis expansions, and study their properties (Section~\ref{sec:opt}).
\item We run extensive empirical evaluations on a variety of nonparametric and semiparametric models against recent state-of-the-art methods in order to demonstrate the effectiveness and generality of our framework (Section~\ref{sec:exp}).
\end{itemize}

As with all score-based approaches to learning DAGs, ours relies on a nonconvex optimization problem. Despite this, we show that off-the-shelf solvers return stationary points that outperform other state-of-the-art methods.
Finally, the algorithm itself can be implemented in standard machine learning libraries such as PyTorch,
which should help 
the community to extend our approach to richer models moving forward.

\paragraph{Related work}

The problem of learning nonlinear and nonparametric DAGs from data has generated significant interest in recent years, including additive models \citep{buhlmann2014cam,voorman2014graph,ernest2016}, generalized linear models \citep{park2018learning,park2017,park2019poisson,gu2018}, additive noise models \citep{hoyer2009,peters2014causal,blobaum2018cause,mooij2016distinguishing}, post-nonlinear models \citep{zhang2009identifiability,zhang2016estimation} and general nonlinear SEM \citep{monti2019causal,goudet2018learning,kalainathan2018SAM,sgouritsa2015inference}. Recently, \citet{yu2019dag} proposed to use graph neural networks for nonlinear measurement models and \citet{huang2018generalized} proposed a generalized score function for general SEM.  The latter work is based on recent work in kernel-based measures of dependence \citep{gretton2005measuring,fukumizu2008kernel,zhang2012kernel}. 
Another line of work uses quantile scoring \citep{tagasovska2018nonparametric}.
Also of relevance is the literature on nonparametric variable selection \citep{bertin2008selection,lafferty2008rodeo,miller2010local,rosasco2013nonparametric,gregorova2018structured} and approaches based on neural networks \citep{feng2017sparse,ye2018variable,abid2019concrete}. The main distinction between our work and previous work is that our framework is not tied to a specific model---as in \cite{yu2019dag,buhlmann2014cam,park2018learning}---as our focus is on a \emph{generic} formulation of an optimization problem that can be solved with \emph{generic} solvers (see Section~\ref{sec:background} for a more detailed comparison). 
This also distinguishes this paper from concurrent work by  \citet{lachapelle2019gradient} that focuses on neural network-based nonlinearities in the local conditional probabilities. 
Furthermore, compared to \citet{huang2018generalized} and \citet{yu2019dag}, our approach can be much more efficient (Section~\ref{sec:exp:structure}; Appendix~\ref{sec:supp:additional-figures}).
As such, we hope that this work is able to spur future work using more sophisticated nonparametric estimators and optimization schemes.

\paragraph{Notation}
Norms will always be explicitly subscripted to avoid confusion: $\norm{\cdot}_{p}$ is the $\ell_{p}$-norm on vectors, $\norm{\cdot}_{L^{p}}$ is the $L^{p}$-norm on functions, $\norm{\cdot}_{p,q}$ is the $(p,q)$-norm on matrices, and $\norm{\cdot}_{F}=\norm{\cdot}_{2,2}$ is the matrix Frobenius norm. For functions $f:\R^{s}\to\R$ and a matrix $A\in\R^{n\times s}$, we adopt the convention that $f(A)\in\R^{n}$ is the vector whose $i$th element is $f(a^{i})$, where $a^{i}$ is the $i$th row of $A$.

\section{Background}
\label{sec:background}

Our approach is based on (acyclic) structural equation models as follows. Let $X=(X_{1},\ldots,X_{d})$ be a random vector and $\gr=(\ver,\edg)$ a DAG with $\ver=X$. We assume that there exist functions $f_{j}:\R^{d}\to\R$
\footnote{The reason for writing $f_{j}(X)$ instead of $f_{j}(X_{\pa(j)})$ is to simplify notation by ensuring each $f_{j}$ is defined on the same space.} and $g_{j}:\R\to\R$ such that
\begin{align}
\E[X_{j}\given X_{\pa(j)}]
= g_{j}(f_{j}(X)), \quad \E f_{j}(X) = 0
\label{eq:sparse:np}
\end{align}
and $ f_{j}(u_{1},\ldots,u_{d}) $  does not depend on $ u_{k} $  if   $ X_{k}\notin\pa(j) $, where $\pa(j)$ denotes the parents of $X_{j}$ in $\gr$. 
Formally, the independence statement means that for any $X_{k}\notin\pa(j)$, the function $a(u):=f_{j}(X_{1},\ldots,X_{k-1},u,X_{k+1},X_{d})$ is constant for all $u\in\R$. Thus, $\gr$ encodes the conditional independence structure of $X$. The functions $g_{j}$, which are typically known, allow for possible non-additive errors such as in generalized linear models (GLMs). The model \eqref{eq:sparse:np} is quite general and includes additive noise models, linear and generalized linear models, and additive models as special cases (Section~\ref{sec:acyclic:cases}).

In this setting, the DAG learning problem can be stated as follows: Given a data matrix $\dat=[\datcol_{1}\given\cdots\given\datcol_{d}]\in\R^{n\times d}$ consisting of $n$ i.i.d. observations of the model \eqref{eq:sparse:np}, we seek to learn the DAG $\gr(X)$ that encodes the dependency between the variables in $X$. Our approach is to learn $f=(f_{1},\ldots,f_{d})$ such that $\gr(f)=\gr(X)$ using a score-based approach. Given a loss function $\loss(y,\yhat)$ such as least squares or the negative log-likelihood, we consider the following program: 
\begin{align}
\begin{aligned}
\min_{f} \ \score(f)
\ \subjectto \
\gr(f)\in\DAG, \\
\text{where} \quad 
\score(f)
= \frac1n\sum_{j=1}^{d}\loss(\datcol_{j}, f_{j}(\dat)). 
\end{aligned}
\label{eq:scorebased:nonlinear}
\end{align}
There are two challenges in this formulation: 1) How to enforce the acyclicity constraint that $\gr(f)\in\DAG$, and 2) How to enforce sparsity in the learned DAG $\gr(f)$? Previous work using linear and generalized linear models rely on a parametric representation of $\gr$ via a weighted adjacency matrix $W\in\wadj$, which is no longer well-defined in the model \eqref{eq:sparse:np}. To address this, we develop a suitable surrogate of $W$ defined for general nonparametric models, to which we can apply the trace exponential regularizer from \citet{zheng2018dags}.

\subsection{Identifiability} 
Existing papers approach this problem as follows: 1) Assume a specific model for \eqref{eq:sparse:np}, 2) Prove identifiability for this specific model, and 3) Develop a specialized algorithm for learning this specific model. By contrast, our approach is generic: We do not assume any particular model form or algorithm, and instead develop a general framework that applies to any model that is identifiable. By now, there is a well-catalogued list of identifiability results for various linear, parametric, and nonlinear models, which we review briefly below (see also Section~\ref{sec:acyclic:cases}). 

When the model \eqref{eq:sparse:np} holds, the graph $\gr$ is not necessarily uniquely defined: A well-known example is when $X$ is jointly normally distributed, in which case the $f_{j}$ are linear functions, and where it can be shown that the graph $\gr$ is not uniquely specified. Fortunately, it is known that this case is somewhat exceptional: Assuming additive noise, as long as the $f_{j}$ are linear with non-Gaussian errors \citep{kagan1973,shimizu2006,loh2014causal} or the functions $f_{j}$ are nonlinear \citep{hoyer2009,zhang2009,peters2014causal}, then the graph $\gr$ is generally identifiable. We refer the reader to \citet{peters2014causal} for details. Another example are so-called \emph{quadratic variance function} models, which are parametric models that subsume many generalized linear models \citep{park2017,park2018learning}. In the sequel, we assume that the model is chosen such that the graph $\gr$ is uniquely defined from \eqref{eq:sparse:np}, and this dependence will be emphasized by writing $\gr=\gr(X)$. Similarly, any collection of functions $f=(f_{1},\ldots,f_{d})$ defines a graph $\gr(f)$ in the obvious way. See Section~\ref{sec:acyclic:cases} for specific examples with
discussion on identifiability.

\subsection{Comparison to existing approaches}
It is instructive at this point to highlight the main distinction between our approach and existing approaches. A common approach is to assume the $f_{j}$ are easily parametrized (e.g. linearity) \citep{zheng2018dags,aragam2015,gu2018,park2017,park2018learning,chen2018causal,ghoshal2017ident}. In this case, one can easily encode the structure of $\gr$ via, e.g. a weighted adjacency matrix, and learning $\gr$ reduces to a parametric estimation problem. Nonparametric extensions of this approach include additive models \citep{buhlmann2014cam,voorman2014graph}, where the graph structure is easily deduced from the additive structure of the $f_{j}$. More recent work \citep{lachapelle2019gradient,yu2019dag} uses specific parametrizations via neural networks to encode $\gr$. An alternative approach relies on exploiting the conditional independence structure of $X$, such as the post-nonlinear model \citep{zhang2009identifiability,yu2019dag}, the additive noise model \citep{peters2014causal}, and kernel-based measures of conditional independence \citep{huang2018generalized}. Our framework can be viewed as a substantial generalization of these approaches: We use partial derivatives to measure dependence in the \emph{general} nonparametric model \eqref{eq:sparse:np} without assuming a particular form or parametrization, and do not explicitly require any of the machinery of nonparametric conditional independence (although we note in some places this machinery is implicit). This allows us to use nonparametric estimators such as multilayer perceptrons and basis expansions, for which these derivatives are easily computed. As a result, the score-based learning problem is reduced to an optimization problem that can be tackled using existing techniques, making our approach easily accessible.

\section{Characterizing acyclicity in nonparametric SEM}
\label{sec:acyclic}

In this section, we discuss how to extend the trace exponential regularizer from \cite{zheng2018dags} beyond the linear setting, and then discuss several special cases.

\subsection{Linear SEM and the trace exponential regularizer}
\label{sec:acyclic:linear}

We begin by briefly reviewing \cite{zheng2018dags} in the linear case, i.e. $g_{j}(s)=s$ and $f_{j}(X)=w_{j}^{T}X$ for some $w_{j}\in\Rbb^{d}$. This defines a matrix $W=[w_{1}\given\cdots\given w_{d}]\in\wadj$ that precisely encodes the graph $\gr(f)$, i.e. there is an edge $X_{k}\to X_{j}$ in $\gr(f)$ if and only if $w_{kj}\ne 0$. In this case, we can formulate the entire problem in terms of $W$: If $\score(W)=\norm{\dat-\dat W}_{F}^{2}/(2n)$, then optimizing $\score(W)$ is equivalent to optimizing $\score(f)$ over linear functions.
Define the function $h(W)=\tr e^{W\circ W}-d$, where $[W\circ W]_{kj}=w_{kj}^{2}$. Then \cite{zheng2018dags} show that \eqref{eq:scorebased:nonlinear} is equivalent to 
\begin{align}
\label{eq:scorebased:notears:lin}
\min_{W\in\wadj} \score(W) 
\ \subjectto \ 
h(W) = 0,
\end{align}
The key insight from \cite{zheng2018dags} is replacing the combinatorial constraint $\gr(W)\in\DAG$ with the continuous constraint $h(W)=0$.
Our goal is to define a suitable surrogate of $W$ for general nonparametric models, so that the same continuous program can be used to optimize \eqref{eq:scorebased:nonlinear}.

\subsection{A notion of nonparametric acyclicity}
\label{sec:acyclic:np}

Unfortunately, for general models of the form \eqref{eq:sparse:np}, there is no $W$, and hence the trace exponential formulation seems to break down. To remedy this, we use partial derivatives to measure the dependence of $f_{j}$ on the $k$th variable, an idea that
dates back to at least \citet{rosasco2013nonparametric}. First, we need to make precise the spaces we are working on: Let $\sob(\R^{d})\subset L^{2}(\R^{d})$ denote the usual Sobolev space of square-integrable functions whose derivatives are also square integrable (for background on Sobolev spaces see \cite{tsybakov2009introduction}). Assume hereafter that $f_{j}\in\sob(\R^{d})$ and denote the partial derivative with respect to $X_{k}$ by $\partial_{k}f_{j}$. It is then easy to show that $f_{j}$ is independent of $X_{k}$ if and only if $\norm{\partial_{k}f_{j}}_{L^{2}}=0$, where $\norm{\cdot}_{L^{2}}$ is the usual $L^{2}$-norm. This observation implies that the matrix $ W(f)
 = W(f_{1},\ldots,f_{d})
\in\wadj $ with entries
\begin{align}
\begin{aligned}
& [W(f)]_{kj}
:= \norm{\partial_{k}f_{j}}_{L^{2}}.
\end{aligned}
\label{eq:defn:npW}
\end{align}
precisely encodes the dependency structure amongst the $X_{j}$.
Thus the program \eqref{eq:scorebased:nonlinear} is equivalent to
\begin{align}
\label{eq:scorebased:notears:np}
\min_{f: f_{j}\in\sob(\R^{d}), \forall j \in [d]}\score(f)
\ \subjectto \
h(W(f)) = 0.
\end{align}
This implies an equivalent continuous formulation of the program \eqref{eq:scorebased:nonlinear}. Moreover, when the functions $f_{j}$ are all linear, $W(f)$ is the same as the weighted adjacency matrix $W$ defined in Section~\ref{sec:acyclic:linear}. Thus, \eqref{eq:scorebased:notears:np} is a genuine generalization of the linear case \eqref{eq:scorebased:notears:lin}.

\subsection{Special cases}
\label{sec:acyclic:cases}

In addition to applying to general nonparametric models of the form \eqref{eq:scorebased:nonlinear} and linear models, the program \eqref{eq:scorebased:notears:np} applies to a variety of parametric and semiparametric models including additive noise models, generalized linear models, additive models, and index models. In this section we discuss these examples along with identifiability results for each case.

\paragraph{Additive noise models}
The nonparametric additive noise model (ANM) \citep{hoyer2009,peters2014causal} assumes that 
\begin{align}
\label{eq:anm}
&X_{j} = f_{j}(X) + \err_{j}, \ 
\E f_{j}(X) = 0, \
\err_{j}\indep f_{j}(X).
\end{align}
and $ \err_{j}\sim\pr_{j} $ is the random noise. 
Clearly this is a special case of \eqref{eq:sparse:np} with $g_{j}(s)=s$. In contrast to the remaining examples below, without additional assumptions, it is not possible to simplify the condition for $[W(f)]_{kj}=0$ in \eqref{eq:defn:npW}. Assuming the $f_{j}$ are three times differentiable and not linear in any of its arguments, this model is identifiable \citep[Corollary~31]{peters2014causal}.

\paragraph{Generalized linear models}
A traditional GLM assumes that $\E[X_{j}\given X_{\pa(j)}] = g_{j}(w_{j}^{T}X)$ for some known link functions $g_{j}:\R\to\R$ and $w_{j}\in\R^{d}$. For example, we can use logistic regression for $X_{j}\in\{0,1\}$ with  $g_{j}(s)=e^{s}/(1+e^{s})$. This is easily generalized to nonparametric mean functions $f_{j}\in\sob(\R^{d})$ by setting
\begin{align}
\begin{aligned}
\label{eq:glm}
&\E[X_{j}\given X_{\pa(j)}]
= g_{j}(f_{j}(X)).
\end{aligned}
\end{align}
Clearly, \eqref{eq:anm} is a special case of \eqref{eq:glm}. Furthermore, for linear mean functions, $[W(f)]_{kj}=0$ if and only if $w_{jk}=0$, recovering the parametric approach in \citet{zheng2018dags}. Several special cases of GLMs are known to be identifiable: Linear Gaussian with equal variances \citep{peters2013}, linear non-Gaussian models \citep{shimizu2006}, Poisson models \citep{park2019poisson}, and quadratic variance function models \citep{park2017}.

\paragraph{Polynomial regression}
In polynomial regression, we assume that $f_{j}(X)$ is a polynomial in $X_{1},\ldots,X_{d}$. More generally, given a known dictionary of functions $\eta_{\ell}(u_{1},\ldots,u_{d})$, we require that $f_{j}(X)=\sum_{\ell}\beta_{j\ell}\,\eta_{\ell}(X)$. Then it is easy to check that $[W(f)]_{kj}=0$ if and only if $\beta_{j\ell}=0$ whenever $\eta_{\ell}$ depends on $u_{k}$. For each $k$, define $a_{jk}(u):=f_{j}(X_{1},\ldots,X_{k-1},u,X_{k+1},X_{d})$. As long as $a_{jk}(u)$ is not a linear function (i.e. each $f_{j}$ is a degree-2 polynomial or higher in $X_{k}$) for all $k$ and $j$, then Corollary~31 in \citet{peters2014causal} implies identifiability of this model.

\paragraph{Additive models}
In an additive model \citep{hastie1987generalized,ravikumar2009sparse}, we assume that $f_{j}(X)=\sum_{k\ne j}f_{jk}(X_{k})$ for some $f_{jk}\in\sob(\R)$. Then it is straightforward to show that $\norm{\partial_{k}f_{j}}_{L^{2}}=0$ if and only if $f_{jk}=0$. In other words, $[W(f)]_{kj}=0$ if and only if $\norm{f_{jk}}_{L^{2}}=0$. Assuming the $f_{jk}$ are three times differentiable and not linear in any of its arguments, this model is identifiable \citep[Corollary~31, see also \citealp{buhlmann2014cam}]{peters2014causal}.

\paragraph{Index models}
The multiple index model \citep{alquier2013,yuan2011} assumes $f_{j}(X)=\sum_{m=1}^{M}h_{jm}(\beta_{jm}^{T}X)$ for some $h_{jm}\in\sob(\R)$ and $\beta_{jm}\in\R^{d}$. As long as $M$ is sufficiently large, these functions are universal approximators \citep{diaconis1984nonlinear}. When $M=1$, this is known as a single-index model. As long as the functions $h_{jm}$ ($m=1,\ldots,M$) are linearly independent, it is straightforward to show that $\norm{\partial_{k}f_{j}}_{L^{2}}=0$ if and only if $\beta_{jmk}=0$ for each $m$. In other words, $[W(f)]_{kj}=0$ if and only if $\sum_{m=1}^{M}\beta_{jmk}^{2}=0$. Once again, assuming three-times differentiability and nonlinearity of $h_{jm}$, Corollary~31 in \citet{peters2014causal} implies identifiability of this model.

Among these examples, both polynomial regression and GLMs with linear mean function are nonlinear but finite-dimensional, and hence the problem \eqref{eq:scorebased:notears:np} is straightforward to solve (see Section~\ref{sec:opt:solve}).

\section{Optimization}
\label{sec:opt}

In general, the program \eqref{eq:scorebased:notears:np} is infinite-dimensional. In this section we discuss different ways to reduce this to a tractable, finite-dimensional optimization problem. One of the advantages of encoding dependence via $W(f)$ is that it provides a plug-and-play framework for plugging in various nonparametric estimators whose derivatives can be computed. We will illustrate two examples using multilayer perceptrons and orthogonal basis expansions, however, we emphasize that it is straightforward to implement other differentiable models for the $f_{j}$. These flexible nonparametric estimators will help reduce \eqref{eq:scorebased:notears:np} to a straightforward optimization problem, as we discuss at the end of this section.

The basic recipe is the following:
\begin{enumerate}
\item Choose a model family for the conditional expectations $\E[X_{j}\given X_{\pa(j)}]$ (e.g. general nonparametric, additive, index, etc.);
\item Choose a suitable family of approximations (e.g. neural networks, orthogonal series, etc.);
\item Translate the loss function $\score(f)$ and constraint $W(f)$ into parametric forms $\score(\param)$ and $W(\param)$ using the approximating family;
\item Solve the resulting finite-dimensional problem.
\end{enumerate}
Step 3 above is the key step that enables transforming \eqref{eq:scorebased:notears:np} into a tractable optimization problem. By approximating the $f_{j}$ with a flexible family of functions parametrized by $\param$, we can replace the infinite-dimensional quantity $W(f)$ with the simpler $W(\param)$. As is standard in the literature on nonparametric estimation, the dimension of $\param$ is allowed to depend on $n$, although this dependence will be suppressed.

\subsection{Multilayer perceptrons}
\label{sec:opt:mlp}

We first consider the use of neural networks to approximate the $f_{j}$, as in an ANM \eqref{eq:anm} or GLM \eqref{eq:glm}. Consider a multilayer perceptron (MLP) with $h$ hidden layers and a single activation $\actv:\R\to\R$, given by 
\begin{align*}
& \MLP(u;A^{(1)},\ldots,A^{(h)})
= \actv(A^{(h)}\actv(\cdots A^{(2)}\actv(A^{(1)}u))), \\
& \quad A^{(\ell)}\in\R^{m_{\ell}\times m_{\ell-1}},
\quad
m_{0}=d.
\end{align*}
By increasing the capacity of the MLP (e.g. increasing the number of layers $h$ or the number of hidden units $m_{\ell}$ in each layer), we can approximate any $f_{j}\in\sob(\R^{d})$ arbitrarily well.

First, we must determine under what conditions $\MLP(u;A^{(1)},\ldots,A^{(h)})$ is independent of $u_{k}$---this is important both for enforcing acyclicity and sparsity. It is not hard to see that if the $k$th column of $A^{(1)}$ consists of all zeros (i.e. $A^{(1)}_{b k}=0$ for all $b=1,\ldots,m_{1}$), then $\MLP(u;A^{(1)},\ldots,A^{(h)})$ will be independent of $u_{k}$.
In fact, we have the following proposition, which implies that this constraint precisely identifies the set of MLPs that are independent of $u_{k}$:
\begin{proposition}
\label{lem:mlp:indep}
Consider the function class $\mathcal{F} $ of all MLPs that are independent of $u_{k}$ and the function class $\mathcal{F}_0 $ of all MLPs such that the $k$th column of $A^{(1)}$ consists of all zeros. Then $\mathcal{F} = \mathcal{F}_{0}$.
\end{proposition}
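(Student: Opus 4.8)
The plan is to establish the two inclusions $\mathcal{F}_{0}\subseteq\mathcal{F}$ and $\mathcal{F}\subseteq\mathcal{F}_{0}$ separately. The first is the observation already recorded just before the statement and needs only one line: if the $k$th column of $A^{(1)}$ is zero, then $A^{(1)}u=\sum_{j\ne k}A^{(1)}_{\cdot j}u_{j}$ does not involve $u_{k}$, and since $u$ enters $\MLP(u;A^{(1)},\ldots,A^{(h)})$ only through $A^{(1)}u$, the MLP is a function of $(u_{j})_{j\ne k}$ alone. This requires no assumption on the activation $\actv$.

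For the reverse inclusion the key idea is that ``independent of $u_{k}$'' means we may freely substitute a fixed value for $u_{k}$ without changing the function, and that at the level of parameters this substitution is \emph{exactly} the operation of zeroing out the $k$th column of $A^{(1)}$. Concretely, fix $f\in\mathcal{F}$ and write $f(u)=\MLP(u;A^{(1)},\ldots,A^{(h)})$. Let $\widetilde{A}^{(1)}$ be $A^{(1)}$ with its $k$th column replaced by zero, and for $u\in\R^{d}$ let $u^{\circ}$ be $u$ with its $k$th coordinate set to $0$. Then $\widetilde{A}^{(1)}u=A^{(1)}u^{\circ}$, and since the other layers are untouched,
\begin{align*}
\MLP(u;\widetilde{A}^{(1)},A^{(2)},\ldots,A^{(h)})
&= \MLP(u^{\circ};A^{(1)},\ldots,A^{(h)}) \\
&= f(u^{\circ}) = f(u),
\end{align*}
where the last equality is precisely the defining property of $\mathcal{F}$: the map $v\mapsto f(u_{1},\ldots,u_{k-1},v,u_{k+1},\ldots,u_{d})$ is constant. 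Hence $f$ admits the representation $\MLP(\,\cdot\,;\widetilde{A}^{(1)},A^{(2)},\ldots,A^{(h)})$ with zero $k$th column, so $f\in\mathcal{F}_{0}$, proving $\mathcal{F}\subseteq\mathcal{F}_{0}$.

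I would deliberately avoid the alternative, more computational route of differentiating: the chain rule gives $\partial_{k}\MLP(u)=D^{(h)}(u)A^{(h)}\cdots D^{(1)}(u)\,(A^{(1)}e_{k})$, where $D^{(\ell)}(u)$ is the diagonal matrix of activation derivatives at layer $\ell$, so $f\in\mathcal{F}$ forces this product to vanish identically in $u$; but concluding $A^{(1)}e_{k}=0$ from that would require non-degeneracy hypotheses on $\actv$ (e.g.\ real-analytic and non-affine) together with a genericity argument on the weight matrices, and even then it would only constrain the \emph{given} parametrization rather than yield the clean equality of function classes. The substitution argument sidesteps all of this and is robust to an arbitrary $\actv$. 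Accordingly there is no serious obstacle here; the only points to get right are the bookkeeping identity $\widetilde{A}^{(1)}u=A^{(1)}u^{\circ}$ and the logical observation that membership in $\mathcal{F}_{0}$ only requires exhibiting \emph{one} zero-column representation of $f$, not showing that every representation of $f$ already has that form.
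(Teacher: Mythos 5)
Your proof is correct and follows essentially the same route as the paper's: the inclusion $\mathcal{F}_{0}\subseteq\mathcal{F}$ by the one-line observation that $u$ enters only through $A^{(1)}u$, and the reverse inclusion by zeroing the $k$th coordinate of the input, zeroing the $k$th column of $A^{(1)}$, and using the identity $\widetilde{A}^{(1)}u=A^{(1)}u^{\circ}$ together with the constancy of $f$ in $u_{k}$. Your closing remarks (no assumption needed on $\actv$, and that only one zero-column representation is required) are accurate and consistent with the paper's argument.
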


This important proposition 
provides a rigorous way to enforce that an MLP approximation depends only on a few coordinates. 
Indeed, it is clear that constraining $A^{(1)}_{b k}=0$ for each $b$ will remove the dependence on $k$, however, there is a concern that we could lose the expressivity of multiple hidden layers in doing so. Fortunately, this proposition implies that there is in fact no loss of expressivity or approximating power. Furthermore, it follows that $[W(f)]_{kj}=0$ if 
$ \norm{ k\text{th-column}(A_{j}^{(1)}) }_2 = 0 $. 
This result enables us to characterize acyclicity independent of the depth of the neural network, as opposed to handling individual paths through the entire neural network as in \cite{lachapelle2019gradient}, which depends linearly on the depth.

Let $\param_{j}=(A_{j}^{(1)},\ldots,A_{j}^{(h)})$ denote the parameters for the $j$th MLP and $\param=(\param_{1},\ldots,\param_{d})$. 
Define $[W(\param)]_{kj}= \norm{ k\text{th-column}(A_{j}^{(1)} ) }_2 $. 
The problem \eqref{eq:scorebased:nonlinear} thus reduces to
\begin{align}
\min_{\theta} \quad  &  \ \frac1{n}\sum_{j=1}^{d}
    \loss(\datcol_{j}, \MLP(\dat; \theta_j)  )  + \lambda\norm{A_{j}^{(1)}}_{1,1} \nonumber \\
\subjectto &  \  h(W(\param)) = 0.
\label{eq:scorebased:notears:mlp}
\end{align}

\subsection{Basis expansions}
\label{sec:opt:sobolev}

As an alternative to neural networks, we also consider the use of orthogonal basis expansions \citep{schwartz1967estimation,wahba1981data,hall1987cross,efromovich2008nonparametric}. While many techniques are valid, we adopt an approach based on \citet{ravikumar2009sparse}. Let $\{\obfcn_{r}\}_{r=1}^{\infty}$ be an orthonormal basis of $\sob(\R^{d})$ such that $\E\obfcn_{r}(X)=0$ for each $r$. Then any $f\in\sob(\R^{d})$ can be written uniquely
\begin{align}
f(u)
&= \sum_{r=1}^{\infty}\obwgt_{r}\obfcn_{r}(u),
\quad
\obwgt_{r}
= \int_{\R^{d}}\obfcn_{r}(u)f(u)\,du.
\end{align}
As long as the coefficients $\obwgt_{r}$ decay sufficiently fast, $f$ can be well-approximated by the finite series $\wh{f}^{R}:=\sum_{r=1}^{R}\obwgt_{r}\obfcn_{r}$. Similar claims are true for one-dimensional Sobolev functions, which applies to both additive (i.e. for $f_{jk}$) and index (i.e. for $h_{jm}$) models.

We illustrate here an application with additive models and one-dimensional expansions. It is straightforward to extend these ideas to more general models using a tensor product basis, though this quickly becomes computationally infeasible. For more on high-dimensional orthogonal series, see \citet{lee2016spectral}. Thus, 
\begin{align}
\begin{aligned}
f_{j}(u_{1},\ldots,u_{d})
& =\sum_{k\ne j}f_{jk}(u_{k}) \\
& =\sum_{k\ne j}\sum_{r=1}^{\infty}\obwgt_{jkr}\obfcn_{r}(u_{k}).
\end{aligned}
\end{align}
Given integers $R_{k}$ and assuming $f_{jk}$ is sufficiently smooth, we have $\norm{f_{jk}-\wh{f}_{jk}^{R_{k}}}_{L^{2}}=O(1/R_{k})$ \citep{efromovich2008nonparametric}, so that the overall approximation error is on the order $O(d/\min_{k}R_{k})$. Furthermore, $[W(f)]_{kj}=0\iff\norm{f_{jk}}_{L^{2}}=0\iff\obwgt_{jkr}=0$ for all $r$. Since we are discarding terms for $r>R_{k}$, in practice it suffices to check that $\obwgt_{jkr}=0$ for $r=1\,\ldots,R_{k}$, or $\sum_{r=1}^{R_{k}}\obwgt_{jkr}^{2}=0$.

Letting $\param$ denote the parameters $\obwgt_{jkr}$ for all $j,k,r$, it thus suffices to define $[W(\param)]_{kj}=[\sum_{r=1}^{R_{k}}\obwgt_{jkr}^{2}]^{1/2}$ for the purposes of checking acyclicity.
Let $\obmat_{k}$ be the matrix $[\obmat_{k}]_{ir}=\obfcn_{r}(X_{k}^{(i)})$. To estimate the coefficients $\obwgt_{jkr}$, we solve
\begin{align}
\label{eq:basis:spam}
\min_{\theta} \quad  & \
\frac1{n}\sum_{j=1}^{d}
    \loss\Big(\datcol_{j}, \sum_{k\ne j}\obmat_{k}a_{jk}\Big) \nonumber \\
&     + \lambda_{1}\sum_{k\ne j}\frac1n a_{jk}^{T}\obmat_{k}^{T}\obmat_{k}a_{jk} 
    + \lambda_{2}\sum_{k\ne j}\norm{a_{jk}}_{1}  \nonumber \\
\subjectto & \  h(W(\param)) = 0.
\end{align}
This is similar to \citet{ravikumar2009sparse} with the addition of an explicit $\ell_{1}$ constraint. 

\subsection{Solving the continuous program}
\label{sec:opt:solve}

Having converted $ L(f) $ and $ W(f) $ to their finite-dimensional counterparts, we are now ready to solve \eqref{eq:scorebased:notears:np} by applying standard optimization techniques. 
We emphasize that the hard work went into formulating the programs \eqref{eq:scorebased:notears:mlp} and \eqref{eq:basis:spam} as generic problems for which off-the-shelf solvers can be used.
Importantly, since in both \eqref{eq:scorebased:notears:mlp} and \eqref{eq:basis:spam} the term $ W(\theta) $ is differentiable \wrt $ \theta $, the optimization program is an $ \ell_1 $-penalized smooth minimization under a differentiable equality constraint.
As in \citet{zheng2018dags}, the standard machinery of augmented Lagrangian can be applied, resulting in a series of unconstrained problems: 
\begin{align}
\label{eq:opt:augmented-primal}
\begin{aligned}
& \min_{\theta} \ F(\theta) + \lambda \norm{\theta}_1, & \\
& F(\theta) = L(\theta) + \frac{\rho}{2} |h (W(\theta))|^2 + \alpha h(W(\theta)) &
\end{aligned}
\end{align}
where $ \rho$ is a penalty parameter and $ \alpha $ is a dual variable. 

A number of optimization algorithms can be applied to the above \emph{unconstrained} $ \ell_1 $-penalized smooth minimization problem. 
A natural choice is the L-BFGS-B algorithm~\citep{byrd1995limited}, which can be directly applied by casting \eqref{eq:opt:augmented-primal} into a box-constrained form:
\begin{align}
\begin{aligned}
& \min_{\theta} \  F(\theta) + \lambda \norm{\theta}_1  \\
& \iff \ \ 
\min_{\theta^+\ge 0, \theta^- \ge 0}   F(\theta^+ - \theta^-) + \lambda \one^T  (\theta^+ + \theta^-)
\end{aligned}
\label{eq:opt:box-constrained-form}
\end{align}
where $ \one $ is a vector of all ones. We note that as in \citet{zheng2018dags}, \eqref{eq:opt:augmented-primal} is a nonconvex program, and at best can be solved to stationarity. Our experiments indicate that this nonetheless leads to competitive and often superior performance in practice.

\section{Experiments}
\label{sec:exp}

\newcommand{\mlp}{\mathsf{NOTEARS\mbox{-}MLP}}
\newcommand{\mlppp}{\mathsf{NOTEARS\mbox{-}MLP}\texttt{++}}
\newcommand{\basis}{\mathsf{NOTEARS\mbox{-}Sob}}
\newcommand{\basispp}{\mathsf{NOTEARS\mbox{-}Sob}\texttt{++}}
\newcommand{\linear}{\mathsf{Linear}}
\newcommand{\fgs}{\mathsf{FGS}}
\newcommand{\gs}{\mathsf{GSGES}}
\newcommand{\gnn}{\mathsf{GNN}}
\newcommand{\cam}{\mathsf{CAM}}

\begin{figure*}[t]
\centering
\includegraphics[width=0.99\textwidth]{./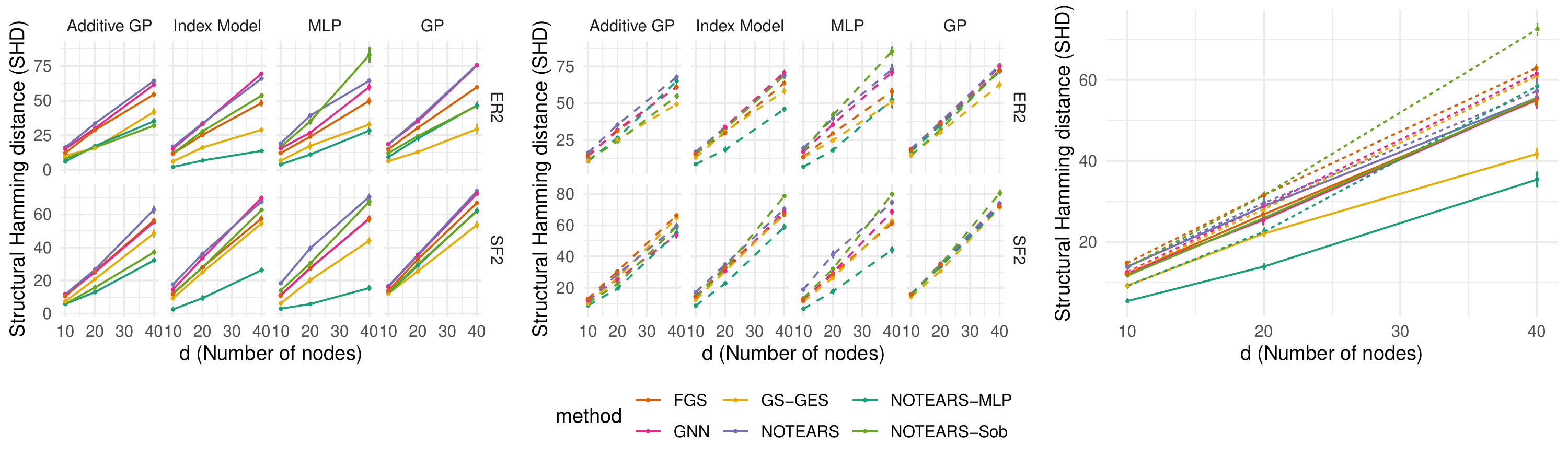}
\caption{Structure recovery measured by SHD (lower is better) to ground truth. Left: $ n = 1000 $. Middle: $ n = 200 $. Right: Average over all configurations. Rows: random graph model (Erdos-Renyi and scale-free). Columns: different types of SEM. $ \mlp $ performs well on a wide range of settings, while $ \basis $ shows good accuracy on additive models.
}
\label{fig:compare:shd}
\end{figure*}

\begin{figure*}[t]
\centering
\includegraphics[width=0.99\textwidth]{./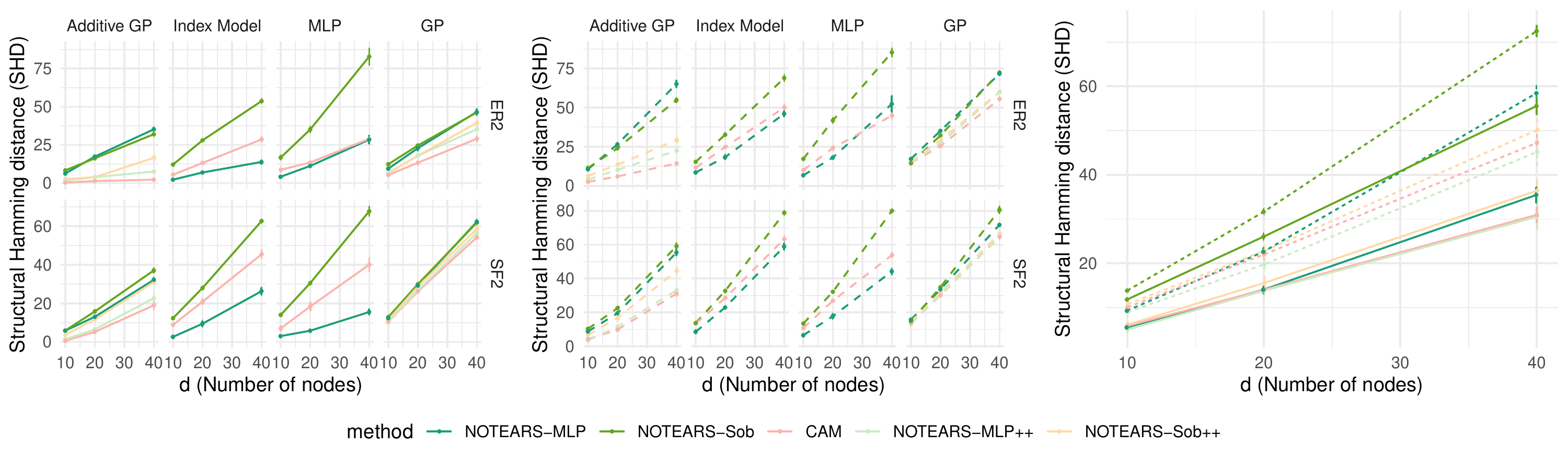}
\caption{Structure recovery measured by SHD (lower is better) to ground truth. Left: $ n = 1000 $. Middle: $ n = 200 $. Right: Average over all configurations. Rows: random graph model (Erdos-Renyi and scale-free). Columns: different types of SEM. Either $ \mlp $ or $ \mlppp $ (i.e. $ \mlp $ with neighborhood selection and pruning) achieves competitive accuracy compared to $ \cam $. 
}
\label{fig:compare:shd:cam}
\end{figure*}

We study the empirical performance of two instances of the general framework: MLP (\ref{sec:opt:mlp}) and Sobolev expansions (\ref{sec:opt:sobolev}), denoted by $ \mlp $ and $ \basis $. 
For $ \mlp $ we use an MLP with one hidden layer with 10 hidden units and sigmoid activation function. 
For $ \basis $ we use Sobolev basis $ \phi_r (u) = s_r \sin(u / s_r) $, $ s_r = 2 / ((2r - 1)\pi) $ ($ r = 1, \dotsc, 10 $).
Complete details on all baselines and simulations, including a discussion of computational complexity and runtimes, can be found 
in the appendix.
Code implementing our method is available at \url{https://github.com/xunzheng/notears}.

\paragraph{Baselines}
For comparison, the following methods are chosen as baselines:
fast greedy equivalence search $ (\fgs) $~\citep{ramsey2017million}, 
greedy equivalence search with generalized scores $ (\gs) $~\citep{huang2018generalized}, 
DAG-GNN $ (\gnn) $~\citep{yu2019dag}, 
NOTEARS $ (\linear) $~\citep{zheng2018dags} for linear SEM, 
and causal additive models $ (\cam) $~\citep{buhlmann2014cam}. 
To summarize, $ \fgs $ and $ \linear $ are specialized at linear models, whereas $ \gs $, $ \gnn $, and $ \cam $ targets general nonlinear dependencies.
Comparisons with other score-based methods (KGV score~\citep{bach2003learning}, Spearman correlation~\citep{sokolova2014causal}) and constraint-based methods (PC~\citep{spirtes2000}, MM-MB~\citep{aliferis2010local}) can be found in previous work \citep{huang2018generalized}, hence are omitted. 

\paragraph{Simulation}
The ground truth DAG is generated from two random graph models: Erdos-Renyi (ER) and scale-free (SF). 
We use ER2 to denote an ER graph with $ s_0 = 2d $ edges, likewise for SF. 
Given the ground truth DAG, we simulate the SEM $ X_j = f_j (X_{\pa(j)}) + z_j $ for all $ j \in [d] $ in topological order, and each $ z_j \sim \Nsf(0,1) $. 
To evaluate the performance under different data generation mechanisms, we consider four models for the $ f_j $: 1) Additive models with Gaussian processes (GPs) for each $f_{jk}$, 2) Index models ($M=3$), 3) ANM with MLPs, and 4) ANM with GPs.

\paragraph{Metrics}
We evaluate the estimated DAG structure using the following common metrics: false discovery rate (FDR), true positive rate (TPR), false positive rate (FPR), and structural Hamming distance (SHD). 
Note that both $ \fgs $ and $ \gs $ return a CPDAG that may contain undirected edges, in which case we evaluate them favorably by assuming correct orientation for undirected edges whenever possible, similar to \citep{zheng2018dags}.

\subsection{Structure learning}
\label{sec:exp:structure}

In this experiment we examine the structure recovery of different methods by comparing the DAG estimates against the ground truth. 
We simulate \{ER1, ER2, ER4, SF1, SF2, SF4\} graphs with $ d = \{10, 20, 40\} $ nodes.
For each graph, $ n = \{1000, 200\} $ data samples are generated. 
The above process is repeated 10 times and we report the mean and standard deviations of the results. 
For $ \mlp $ and $ \basis $, $ \lambda = \{0.01, 0.03\} $ are used for $ n = \{1000, 200\} $ respectively.

Figure~\ref{fig:compare:shd} shows the SHD in various settings; the complete set of results for the remaining metrics are deferred to the supplement. 
Overall, the proposed $ \mlp $ method attains the best SHD (lower the better) across a wide range of settings, particularly when the data generating mechanism is an MLP or an index model. 
One can also observe that the performance of $ \mlp $ stays stable for different graph types with varying density and degree distribution, 
as it does not make explicit assumptions on the topological properties of the graph such as density or degree distribution. 
Not surprisingly, $ \basis $ performs well when the underlying SEM is additive GP. 
On the other hand, when the ground truth is not an additive model, the performance of $ \basis $ degrades as expected. 
Finally, we observe that $ \gs $ outperforms $ \mlp $ and $ \basis $ on GP, which is a nonparametric setting in which a kernel-based dependency measure can excel,
however, we note that the kernel-based approach accompanies an $ O(n^3) $ time complexity, compared to linear dependency on $ n $ in $ \mlp $ and $ \basis $.
Also, with by properly tuning the regularization parameter, the performance of $ \mlp $ for each individual setting can be improved considerably, for example in the GP setting.
Since such hyperparameter tuning is not the main focus of this paper, we fix a reasonable $ \lambda $ for all settings (see Appendix~\ref{sec:supp:additional-figures} for more discussion). 

With respect to runtime and scalability, we note that the computational complexity of our approach depends on the choice of nonparametric estimator. For example, $ \mlp $ requires $ O(n d^2 m + d^2 m + d^3) $ flops per iteration of L-BFGS-B.
In terms of runtime, the average runtime of $ \gs $ on ER2 with $d = 40$, $n = 1000$ is over 90 minutes, whereas $ \mlp $ takes less than five minutes on average (see Appendix~\ref{sec:supp:additional-figures} for more discussion).

Figure~\ref{fig:compare:shd:cam} shows the SHD compared with $ \cam $. 
We first observe that $ \mlp $ outperforms $ \cam $ in multiple index models and MLP models,
on the other hand, $ \cam $ achieves better accuracy on additive GP and the full GP setting. 
Recall that the $\cam$ algorithm involves three steps: 1) Preliminary neighborhood search (PNS), 2) Order search by greedy optimization of the likelihood, and 3) Edge pruning. By comparison, our methods effectively only perform the second step, and can easily be pre- and post-processed with the first (PNS) and third (edge pruning) steps.
To further investigate the efficacy of these additional steps, we applied both preliminary neighborhood selection and edge pruning to $ \mlp $ and $ \basis $ on additive GP and GP settings, denoted as $ \mlppp $ and $ \basispp $.
Noticeably, the output from PNS simply translates to a set of constraints in the form of $\theta_j = 0$ that can be easily incorporated into the L-BFGS-B algorithm for \eqref{eq:opt:box-constrained-form}, demonstrating the flexibility of the proposed approach.
The performance improves in both cases, matching or improving vs. $ \cam $.

\subsection{Sensitivity to number of hidden units}

\begin{figure}
\centering 
\includegraphics[width=0.5\textwidth]{./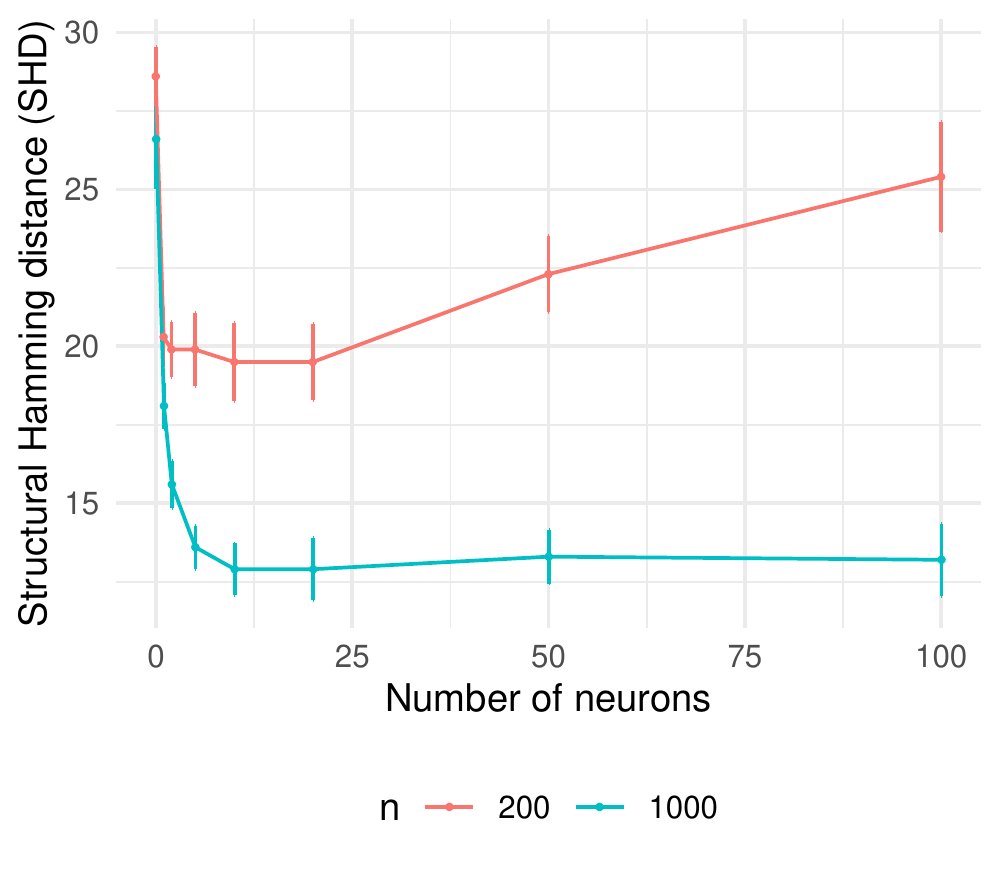}
\caption{SHD (lower is better) with varying hidden layer size in $ \mlp $. }
\label{fig:sensitivity}
\end{figure}

We also investigated the effect of number of hidden units in the $ \mlp $ estimate. 
It is well-known that as the size of the hidden layer increases, the functions representable by an MLP become more flexible. 
On the other hand, larger networks 
require more samples to estimate the parameters. 
Indeed, Figure~\ref{fig:sensitivity} confirms this intuition.
We plot the SHD with varying number of hidden units ranging from zero (\ie linear function) to 100 units, using $ n=1000 $ and $ n=200 $ samples generated from the additive GP model on SF2 graph with $ d=20 $ nodes.
One can first observe a sharp phase transition between zero and very few hidden units, which suggests the power of nonlinearity. 
Moreover, as the number of hidden units increases to 20, the performance for both $ n=1000 $ and $ n=200 $ steadily improves, in which case the increased flexibility brings benefit. 
However, as we further increase the number of hidden units, while SHD for $ n=1000 $ remains similar, the SHD for $ n=200 $ deteriorates, hinting at the lack of samples to take advantage of the increased flexibility.

\subsection{Real data}
Finally, we evaluated $ \mlp $ on a real dataset from \cite{sachs2005causal} that is commonly used as a benchmark as it comes with a \emph{consensus network} that is accepted by the biological community. 
The dataset consists of $ n=7466 $ continuous measurements of expression levels of proteins and phospholipids in human immune system cells for $ d=11 $ cell types. 
We report an SHD of 16 with 13 edges estimated by $ \mlp $. 
In comparison, 
NOTEARS predicts 16 edges with SHD of 22 and
$ \gnn $ predicts 18 edges that attains SHD of 19. (Due to the large number of samples, we could not run $\gs$ on this dataset.)
Among the 13 edges predicted by $ \mlp $, 7 edges agree with the consensus network: 
raf $ \to $ mek, 
mek $ \to $ erk,
PLCg $ \to $ PIP2, 
PIP3 $ \to $ PLCg,
PIP3 $ \to $ PIP2,
PKC $ \to $ mek,
PKC $ \to $ jnk; 
and 3 edges are predicted but in a reversed direction:
raf $ \gets $ PKC,
akt $ \gets $ erk,
p38 $ \gets $ PKC. 
Among the true positives, 3 edges are not found by other methods:
mek $ \to $ erk,
PIP3 $ \to $ PLCg,
PKC $ \to $ mek.

\section{Discussion}
\label{sec:disc}
We present a framework for score-based learning of sparse  directed acyclic graphical models that subsumes many popular parametric, semiparametric, and nonparametric models as special cases. 
The key technical device is a notion of nonparametric acyclicity that leverages partial derivatives in the algebraic characterization of DAGs. 
With a suitable choice of the approximation family, the estimation problem becomes a finite-dimensional differentiable program that can be solved by standard optimization algorithms. 
The resulting continuous optimization algorithm updates the entire graph (i.e. all edges simultaneously) in each iteration using global information about the current state of the network, as opposed to traditional local search methods that update one edge at a time based on local information. 
Notably, our approach is generally more efficient and more accurate than existing approaches, despite relying on generic algorithms. This out-of-the-box performance is desirable, especially when noting that future improvements and specializations can be expected to improve the approach substantially.

\paragraph{Acknowledgements}

We acknowledge the support of NSF via IIS-1909816, OAC-1934584, ONR via N000141812861, NSF IIS1563887 and DARPA/AFRL FA87501720152. 
Any opinions, findings and conclusions or recommendations expressed in this material are those of the author(s) and do not necessarily reflect the views of the National Science Foundation, Defense Advanced Research Projects Agency, or Air Force Research Laboratory.

\bibliographystyle{abbrvnat}

\bibliography{dagmlp-bib}

\clearpage
\appendix

\section{Proofs}

In this Appendix, we prove Proposition~\ref{lem:mlp:indep}. For completeness, note that
\begin{align*}
    \mathcal{F} = \{ f \ |\  & f(u) =  \MLP(u;A^{(1)},\ldots,A^{(h)}), \\
    & f \text{ independent of } u_k \}
\end{align*}
and
\begin{align*}
    \mathcal{F}_0 = \{ f \ |\  & f(u) =  \MLP(u;A^{(1)},\ldots,A^{(h)}), \\
    & A^{(1)}_{ bk }=0, \forall b=1,\ldots,m_{1} \}.
\end{align*}
We omit the bias terms in each layer as it does not affect the statement. 

\begin{proof}[Proof of Proposition~\ref{lem:mlp:indep}]
We will show that $\mathcal{F} \subseteq \mathcal{F}_0$ and $\mathcal{F}_0 \subseteq \mathcal{F}$. 

(1) $\mathcal{F}_0 \subseteq \mathcal{F}$: 
for any $f_0 \in \mathcal{F}_0$, we have $f_0(u) =  \MLP(u;A^{(1)},\ldots,A^{(h)}) $, where $ A^{(1)}_{bk}=0$ for all $ b = 1, \dotsc, m_1 $. 
Hence the linear function $A^{(1)}u$ is independent of $u_k$. Therefore, 
\begin{align*}
    f_0(u) & =  \MLP(u;A^{(1)},\ldots,A^{(h)}) \\
    & = \actv(A^{(h)}\actv(\cdots A^{(2)}\actv(A^{(1)}u)))
\end{align*}
is also independent of $u_k$, which means $f_0 \in \mathcal{F}$. 

(2) $\mathcal{F} \subseteq \mathcal{F}_0$: for any $f \in \mathcal{F}$, we have $f(u) =  \MLP(u;A^{(1)},\ldots,A^{(h)})$ and $f$ is independent of $u_k$. We will show that $f \in \mathcal{F}_0$ by constructing a matrix $\tilde{A}^{(1)}$, such that
\begin{align}
    f(u) =  \MLP(u;\tilde{A}^{(1)},{A}^{(2)},\ldots,A^{(h)})
\end{align}
and $ \tilde{A}_{bk}^{(1)} = 0$ for all $b = 1, \dotsc, m_1 $.

Let $\Tilde{u}$ be the vector such that $\Tilde{u}_k= 0$ and $\Tilde{u}_{k'} = u_k$ for all $k' \neq k$. 
Since $\Tilde{u}$ and $u$  differ only on the $k$th dimension, and $f$ is independent of $u_k$, we have 
\begin{align}\label{eqn:f_u_f_tilde_u}
    f(u) = f(\Tilde{u}) =  \MLP(\Tilde{u};A^{(1)},\ldots,A^{(h)}).
\end{align}
Now define $\tilde{A}^{(1)}$ be the matrix such that $\tilde{A}_{ bk }^{(1)} = 0$ and $\tilde{A}_{ bk' }^{(1)} = {A}_{bk}^{(1)}$ for all $ k'\neq k$. 
Then we have the following observation: for each entry $ s \in \{1, \dotsc, m_1\}$, 
\begin{align*}
(\tilde{A}^{(1)} u)_s 
& = \sum_{k'=1}^{d} \tilde{A}_{sk'} u_{k'}
= \sum_{k' \neq k} A_{sk'} u_{k'} \\
& = \sum_{k'=1}^{d} A_{s k'}\tilde{u}_{k'} 
= ({A}^{(1)} \tilde{u})_s.
\end{align*}
Hence,
\begin{align}
    \tilde{A}^{(1)} u = {A}^{(1)} \tilde{u}.
\end{align}
Therefore, by \eqref{eqn:f_u_f_tilde_u}
\begin{align*}
    f(u) & = f(\tilde{u}) \\
    &= \MLP(\tilde{u};A^{(1)},\ldots,A^{(h)}) \\
    &= \actv(A^{(h)}\actv(\cdots A^{(2)}\actv(A^{(1)}\tilde{u})))\\
    &= \actv(A^{(h)}\actv(\cdots A^{(2)}\actv(\tilde{A}^{(1)}u))) \\
    &= \MLP(u;\tilde{A}^{(1)},A^{(2)}, \ldots,A^{(h)})
\end{align*}
By definition of $\mathcal{F}_0$, we know that $\MLP(u;\tilde{A}^{(1)},A^{(2)}, \ldots,A^{(h)}) \in \mathcal{F}_0$. Thus, $f \in \mathcal{F}_0$ and we have completed the proof.
\end{proof}

\section{Experiment details}
\label{sec:supp:experiment-details}

\paragraph{Baselines}
We consider the following baselines.
\begin{itemize}
\item Fast greedy equivalence search $ (\fgs) $\footnote{\url{https://github.com/bd2kccd/py-causal}}~\citep{ramsey2017million} is based on greedy search and assumes linear dependency between variables. 
\item Greedy equivalence search with generalized scores $ (\gs) $\footnote{\url{https://github.com/Biwei-Huang/Generalized-Score-Functions-for-Causal-Discovery/}}~\citep{huang2018generalized} is also based on greedy search, but uses generalized scores without assuming a particular model class. 
\item DAG-GNN $ (\gnn) $\footnote{\url{https://github.com/fishmoon1234/DAG-GNN}}~\citep{yu2019dag} learns a (noisy) nonlinear transformation of a linear SEM using neural networks. 
\item NOTEARS $ (\linear) $\footnote{\url{https://github.com/xunzheng/notears}}~\citep{zheng2018dags} learns a linear SEM using continuous optimization. 
\item Causal additive model $ (\cam) $\footnote{\url{https://cran.r-project.org/package=CAM}}~\citep{buhlmann2014cam} learns an additive SEM by leveraging efficient nonparametric regression techniques and greedy search over edges. 
\end{itemize}
For all experiments, default parameter settings are used, except for $ \cam $ where both preliminary neighborhood selection and pruning are applied.

\paragraph{Simulation}
Given the graph $ \Gsf $, we simulate the SEM $ X_j = f_j (X_{\pa(j)}) + z_j $ for all $ j \in [d] $ in the topological order induced by $ \Gsf $. 
We consider the following instances of $ f_j $: 
\begin{itemize}
\item Additive GP: $ f_j(X_{\pa(j)}) = \sum_{k \in \pa(j)} f_{jk} (X_k) $, where each $ f_{jk} $ is a draw from Gaussian process with RBF kernel with length-scale one.
\item Index model: $ f_j(X_{\pa(j)}) = \sum_{m=1}^{3} h_{m} (\sum_{k \in \pa(j)} \theta_{jmk} X_k) $, where $ h_1 = \tanh$, $h_2 = \cos$, $h_3 = \sin $, and each $ \theta_{jmk} $ is drawn uniformly from range $ [-2, -0.5] \cup [0.5, 2] $. 
\item MLP: $ f_j $ is a randomly initialized MLP with one hidden layer of size 100 and sigmoid activation.
\item GP:  $ f_j $ is a draw from Gaussian process with RBF kernel with length-scale one.
\end{itemize}
In all settings, $ z_j $ is \iid standard Gaussian noise.

\section{Additional results}
\label{sec:supp:additional-figures}

\paragraph{Full comparison}
We show \{SHD, FDR, TPR, FPR\} results on all \{ER1, ER2, ER4, SF1, SF2, SF4\} graphs in Figure~\ref{fig:compare:shd:supp}, \ref{fig:compare:fdr}, \ref{fig:compare:tpr}, \ref{fig:compare:fpr} respectively. 
Similarly, see Figure~\ref{fig:compare:shd:cam:supp}, \ref{fig:compare:fdr:cam}, \ref{fig:compare:tpr:cam}, \ref{fig:compare:fpr:cam} for full comparison with $ \cam $. 
As in Figure~\ref{fig:compare:shd}, each row is a random graph model, each column is a type of SEM. 
Overall $ \mlp $ has low FDR/FPR and high TPR, and same for $ \basis $ on additive GP. 
Also observe that in most settings $ \gnn $ has low FDR as well as low TPR, which is a consequence of only predicting a small number of edges.

\paragraph{Complexity and runtime}

\begin{table*}[t]
\centering
\begin{tabular}{@{}lllllll@{}}
\toprule
           & $ \mlp $           & $ \basis $         & $ \fgs $        & $ \linear $       & $ \gnn $           & $ \gs $              \\ \midrule
$ d = 20 $ & 92.12 $\pm$ 22.51  & 62.90 $\pm$ 16.83  & 0.55 $\pm$ 0.43 & 10.95 $\pm$ 4.52  & 498.32 $\pm$ 43.72 & 1547.42 $\pm$ 109.83 \\
$ d = 40 $ & 282.64 $\pm$ 67.46 & 321.88 $\pm$ 57.33 & 0.59 $\pm$ 0.17 & 43.15 $\pm$ 12.43 & 706.35 $\pm$ 64.49 & 6379.98 $\pm$ 359.67 \\ \bottomrule
\end{tabular}
\caption{Runtime (in seconds) of various algorithms on ER2 graph with $ n = 1000 $ samples.}
\label{tbl:runtime}
\end{table*}

Recall that numerical evaluation of matrix exponential involves solving linear systems, hence the time complexity is typically $O(d^3)$ for a dense $d \times d$ matrix. 
Taking $ \mlp $ with one hidden layer of $m$ units as an example, it takes $ O(n d^2 m + d^2 m + d^3) $ time to evaluate the objective and the gradient. 
If $ m/d = O(1)$, this is comparable to the linear case $ O(n d^2 + d^3) $, except for the inevitable extra cost from using a nonlinear function. 
This highlights the benefit of Proposition~\ref{lem:mlp:indep}: the acyclicity constraint almost comes for free. Furthermore, we used a quasi-Newton method to reduce the number of calls to evaluate the gradient, which involves computing the matrix exponential. 
Table~\ref{tbl:runtime} contains runtime comparison of different algorithms on ER2 graph with $ n = 1000 $ samples. 
Recall that the kernel-based approach of $ \gs $ comes with a $ O(n^3) $ computational complexity, whereas $ \mlp $ and $ \basis $ has $ O(n) $ dependency on $ n $. 
This can be confirmed from the table, which shows $ \gs $ has a significantly longer runtime.

\paragraph{Comments on hyperparameter tuning}
The experiments presented in this paper were conducted under a fixed (and therefore suboptimal) value of $ \lambda $ and weight threshold across all graph types, sparsity levels, and SEM types, despite the fact that each configuration may prefer different regularization strengths. 
Indeed, we observe substantially improved performance by choosing different values of hyperparameters in some settings. 
As our focus is not on attaining the best possible accuracy in all settings by carefully tuning the hyperparameters, we omit these results in the main text and only include here as a supplement. 
For instance, for ER4 graph with $ d=40 $ variables and $  n = 200 $ samples, when the SEM is additive GP and MLP, setting $ \lambda = 0.03  $ and threshold = 0.5  gives results summarized in Table~\ref{tbl:rerun1}. 

\begin{table*}[t]
\centering
\begin{tabular}{@{}lllllll@{}}
\toprule
SEM         & Method & SHD                & FDR             & TPR             & FPR             & Predicted \#       \\ \midrule
Additive-GP & $\mlp$ & 124.3 $\pm$ 6.65   & 0.30 $\pm$ 0.07 & 0.35 $\pm$ 0.04 & 0.04 $\pm$ 0.01 & 81.70 $\pm$ 10.49  \\
            & $\gs$  & 121.3 $\pm$ 5.02   & 0.36 $\pm$ 0.05 & 0.28 $\pm$ 0.03 & 0.04 $\pm$ 0.00 & 69.30 $\pm$ 5.01   \\
MLP         & $\mlp$ & 88.40 $\pm$ 11.29  & 0.18 $\pm$ 0.08 & 0.57 $\pm$ 0.06 & 0.03 $\pm$ 0.02 & 111.70 $\pm$ 15.97 \\
            & $\gs$  & 121.60 $\pm$ 11.95 & 0.33 $\pm$ 0.09 & 0.33 $\pm$ 0.06 & 0.04 $\pm$ 0.01 & 77.10 $\pm$ 7.13   \\ \bottomrule
\end{tabular}
\caption{ER4, $ d=40 $, $ n=200 $ with $ \lambda = 0.03 $ and threshold = 0.5.}
\label{tbl:rerun1}
\end{table*}

\begin{figure*}[t]
\centering
\includegraphics[width=0.99\textwidth]{./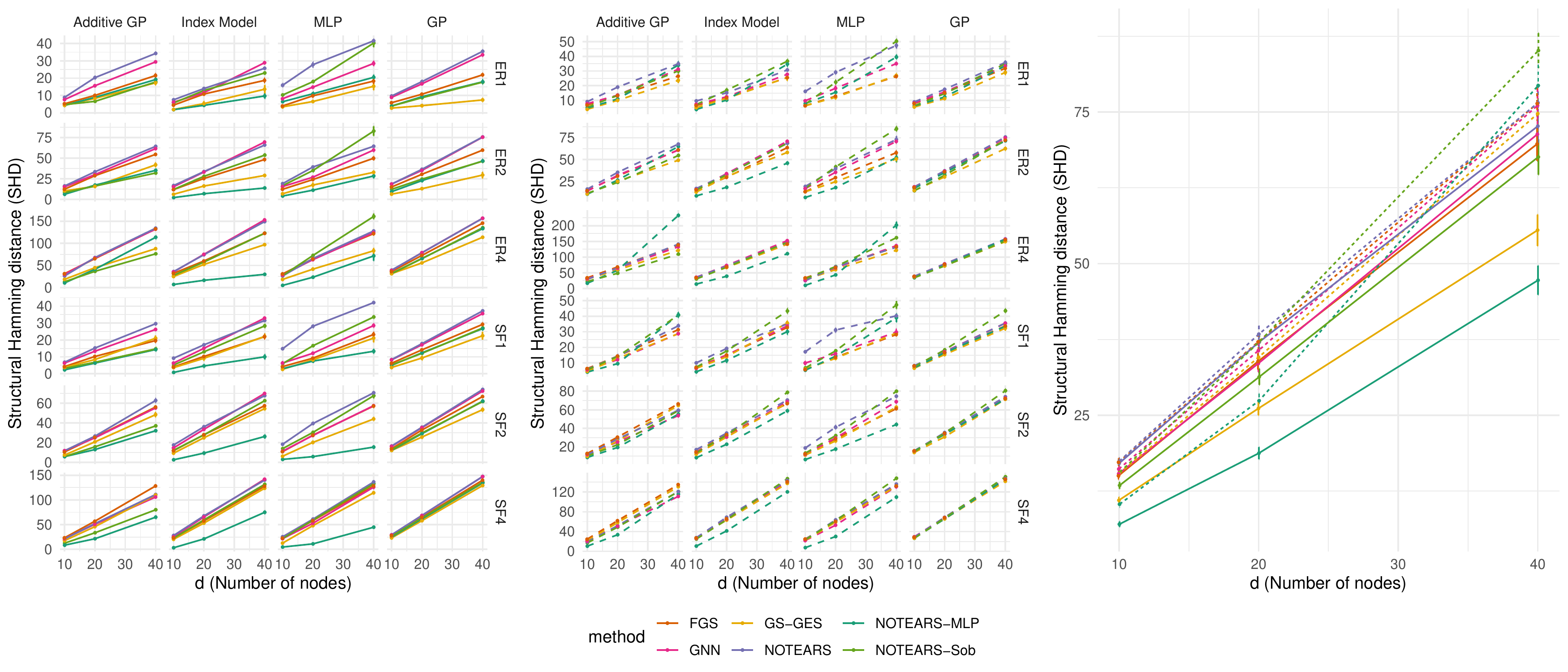}
\caption{Structure recovery measured by SHD (lower is better) to ground truth.}
\label{fig:compare:shd:supp}
\end{figure*}

\begin{figure*}[t]
\centering
\includegraphics[width=0.99\textwidth]{./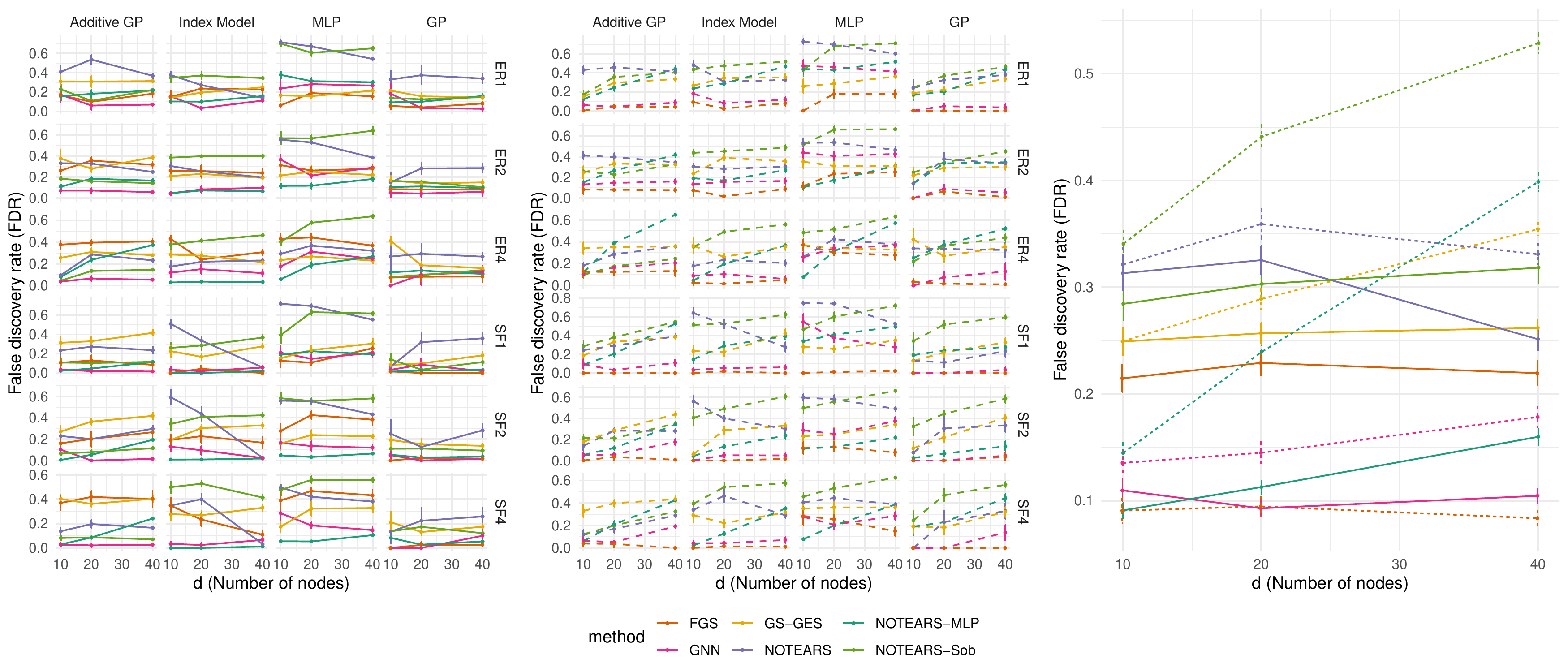}
\caption{Structure recovery measured by FDR (lower is better) to ground truth.}
\label{fig:compare:fdr}
\end{figure*}

\begin{figure*}[t]
\centering
\includegraphics[width=0.99\textwidth]{./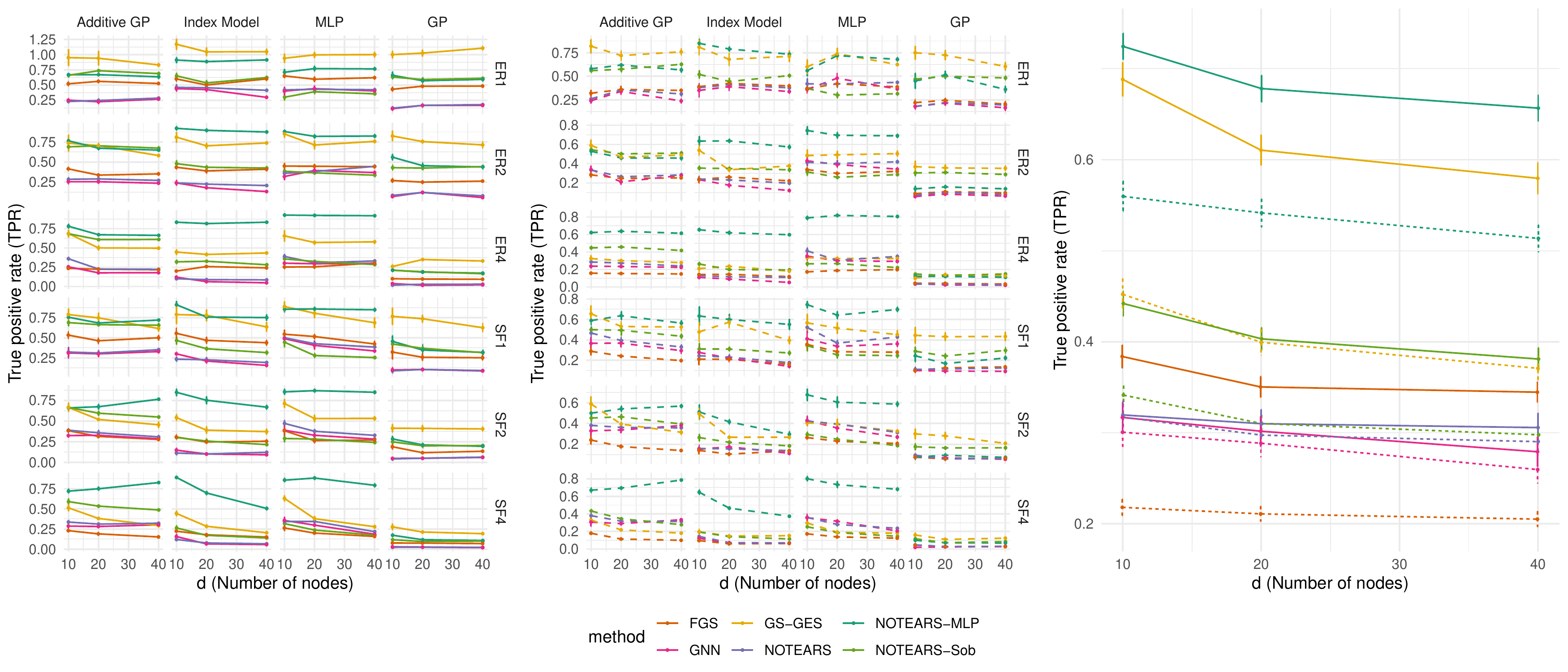}
\caption{Structure recovery measured by TPR (higher is better) to ground truth.}
\label{fig:compare:tpr}
\end{figure*}

\begin{figure*}[t]
\centering
\includegraphics[width=0.99\textwidth]{./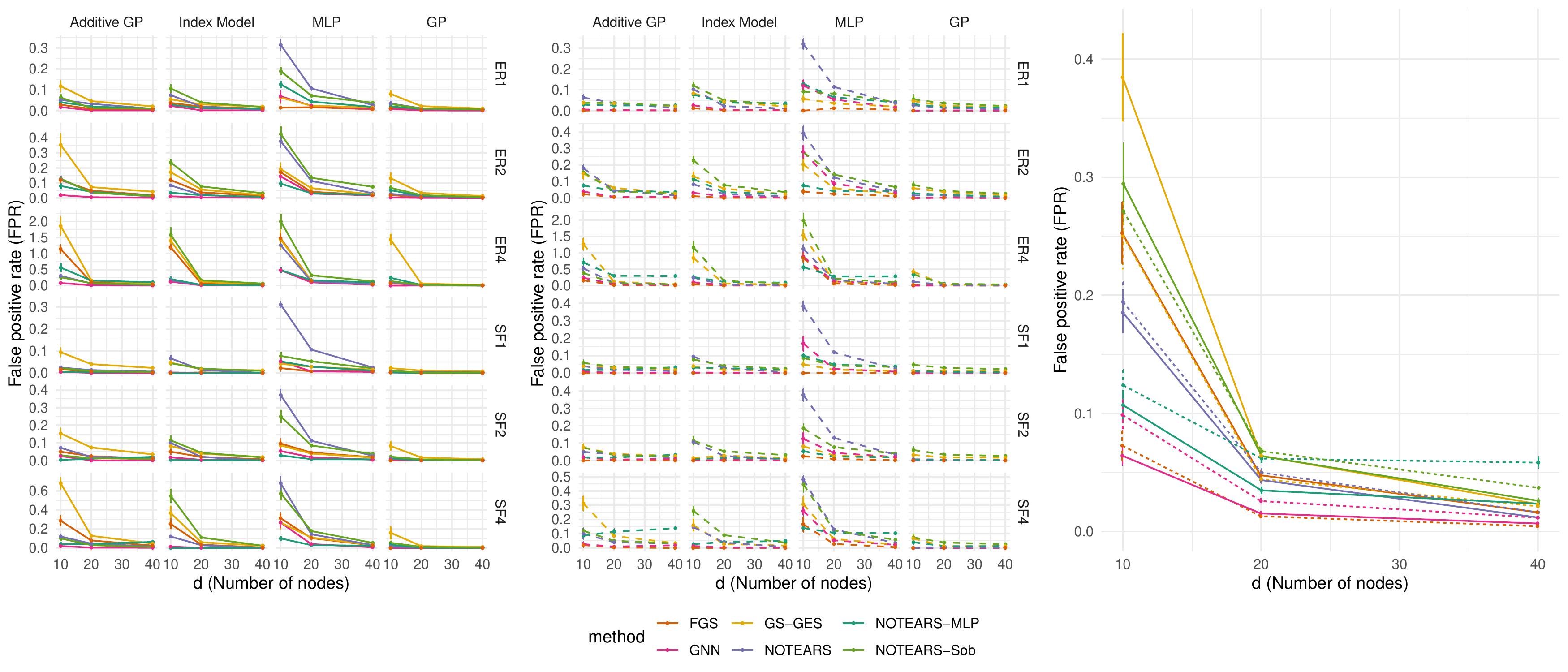}
\caption{Structure recovery measured by FPR (lower is better) to ground truth.}
\label{fig:compare:fpr}
\end{figure*}

\begin{figure*}[t]
\centering
\includegraphics[width=0.99\textwidth]{./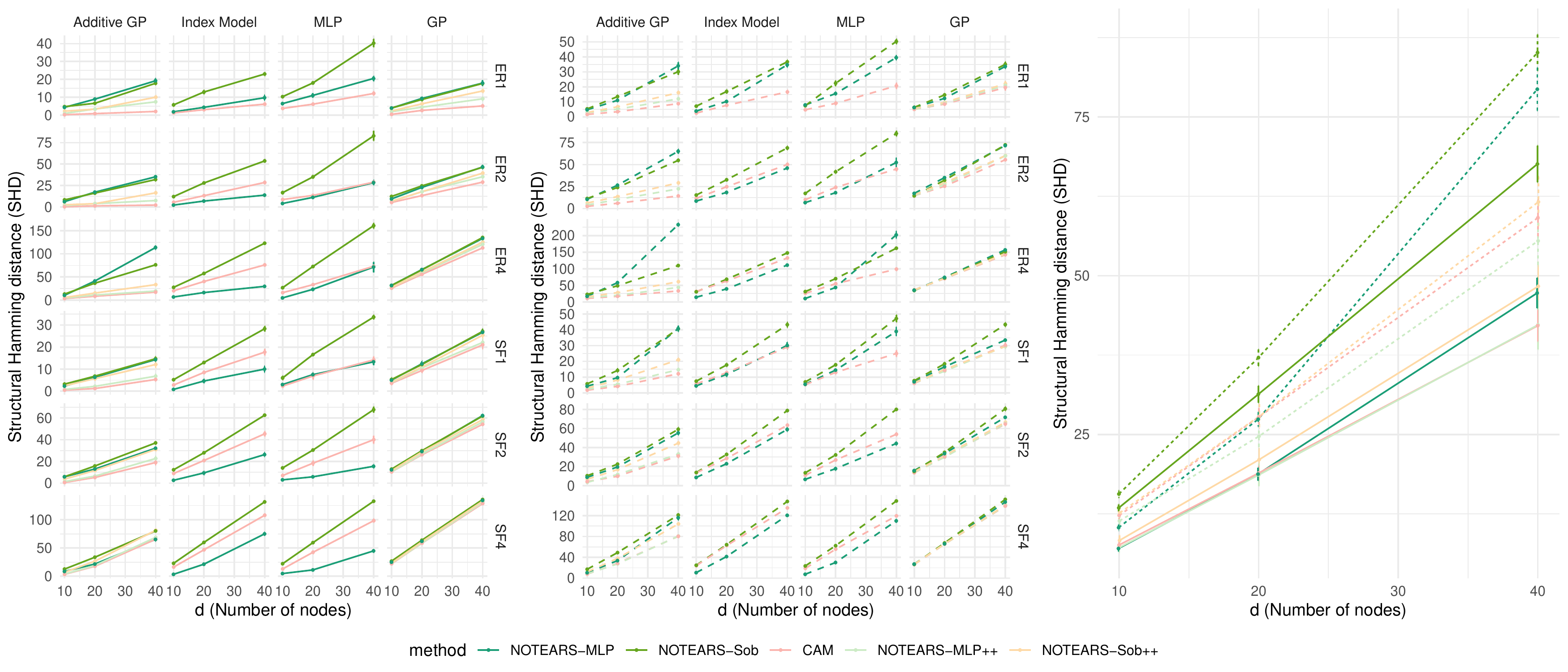}
\caption{Structure recovery measured by SHD (lower is better) to ground truth, compared with $ \cam $.}
\label{fig:compare:shd:cam:supp}
\end{figure*}

\begin{figure*}[t]
\centering
\includegraphics[width=0.99\textwidth]{./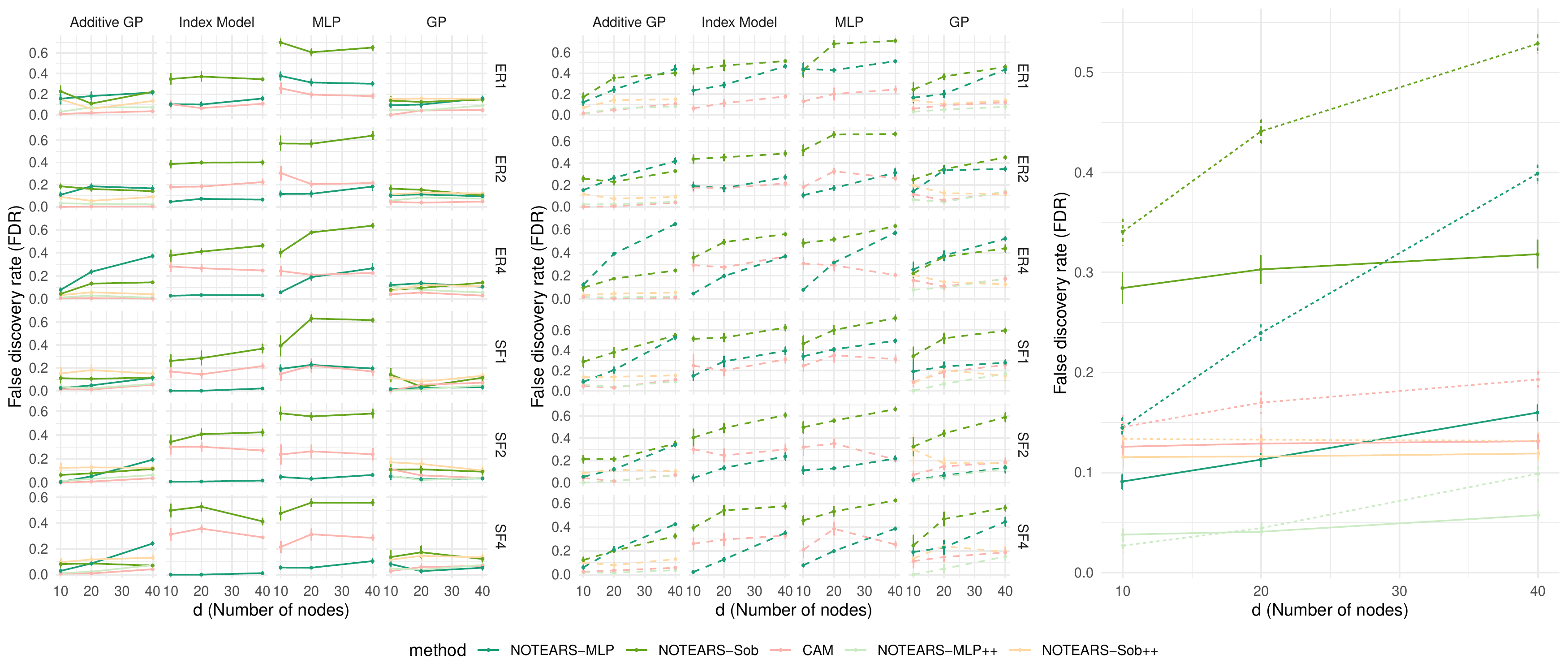}
\caption{Structure recovery measured by FDR (lower is better) to ground truth, compared with $ \cam $.}
\label{fig:compare:fdr:cam}
\end{figure*}

\begin{figure*}[t]
\centering
\includegraphics[width=0.99\textwidth]{./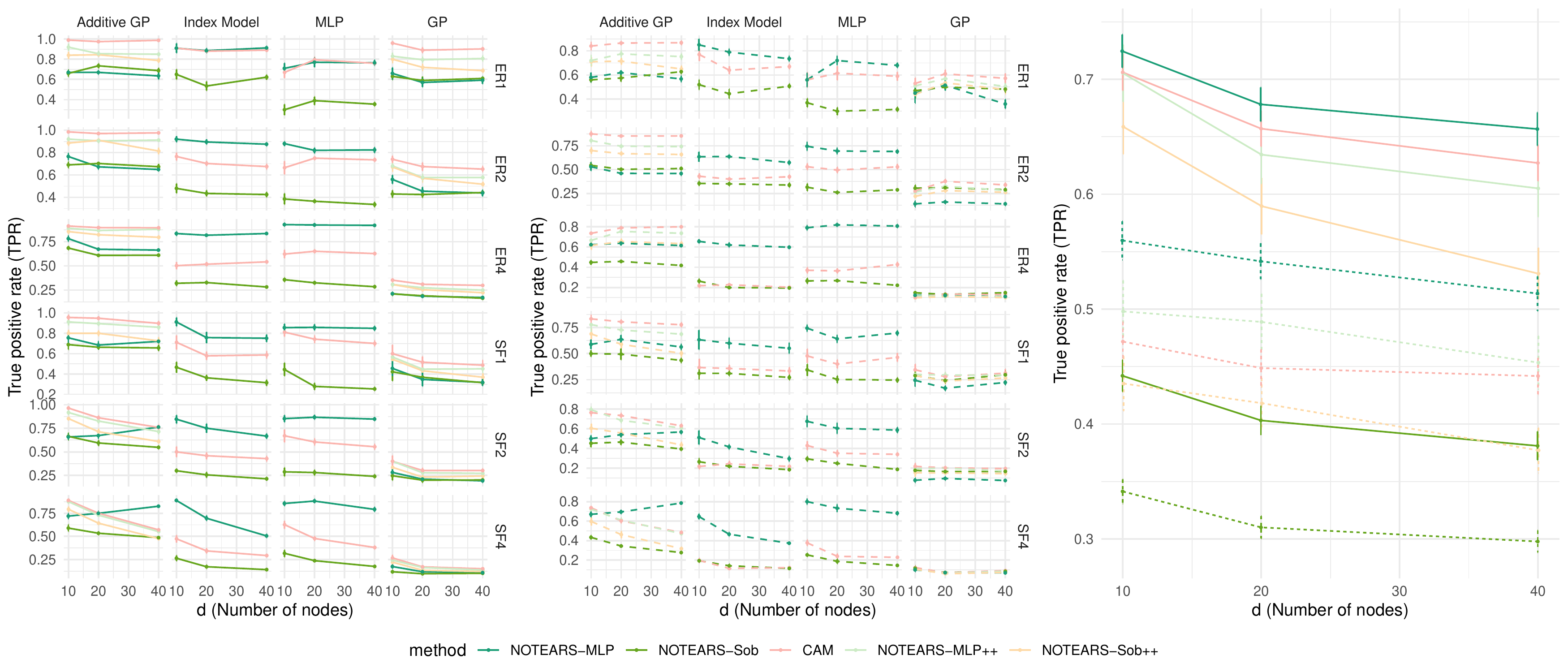}
\caption{Structure recovery measured by TPR (higher is better) to ground truth, compared with $ \cam $.}
\label{fig:compare:tpr:cam}
\end{figure*}

\begin{figure*}[t]
\centering
\includegraphics[width=0.99\textwidth]{./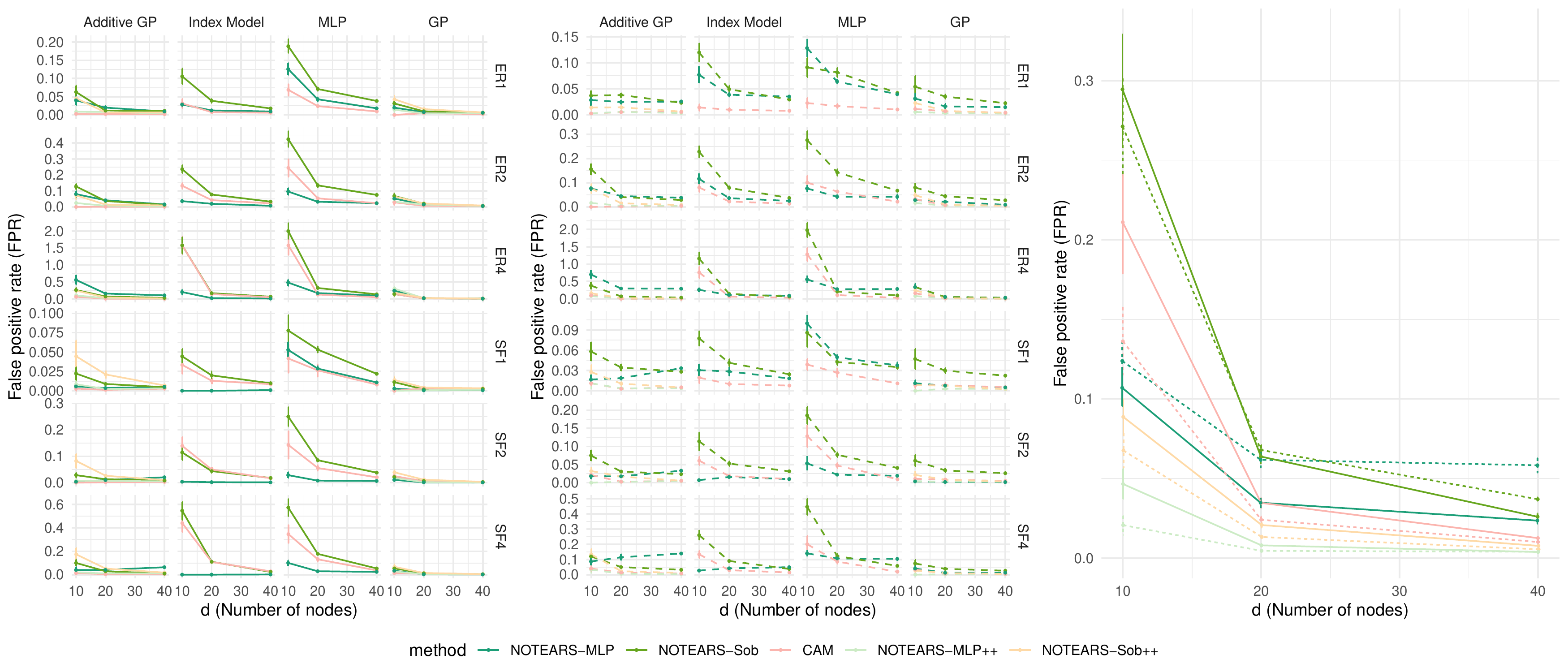}
\caption{Structure recovery measured by FPR (lower is better) to ground truth, compared with $ \cam $.}
\label{fig:compare:fpr:cam}
\end{figure*}

\end{document}